\documentclass{article}

\usepackage[utf8]{inputenc}
\usepackage[T1]{fontenc}
\usepackage{lmodern}
\usepackage{hyperref}
\usepackage{url}
\usepackage{booktabs}
\usepackage{amsfonts}
\usepackage{nicefrac}
\usepackage{microtype}
\usepackage{graphicx}
\usepackage{geometry}
\usepackage{amsmath}
\usepackage{float}
\usepackage{subcaption}
\usepackage{amsthm}

\usepackage{tikz}
\usepackage{pgfplots}
\pgfplotsset{compat=1.18}
\usetikzlibrary{shapes.geometric, arrows, positioning, calc, decorations.pathreplacing}

\tikzstyle{model} = [rectangle, rounded corners, minimum width=2.8cm, minimum height=1.2cm, text centered, draw=black, fill=blue!10, line width=0.8pt]
\tikzstyle{router} = [diamond, aspect=2, minimum width=4cm, minimum height=1cm, text centered, draw=black, fill=orange!20, line width=1pt]
\tikzstyle{arrow} = [thick,->,>=stealth]
\tikzstyle{exit} = [rectangle, draw=gray, fill=gray!10, minimum width=2.5cm, text centered, dashed]

\newtheorem{theorem}{Theorem}
\newtheorem{definition}{Definition}

\title{\textbf{Pyramid MoA: A Probabilistic Framework for \\ Cost-Optimized Anytime Inference}}
\author{\textbf{Arindam Khaled} \\ Independent Researcher \\ \texttt{arindamkhaled@gmail.com}}
\date{April 12, 2026}

\begin{document}

\maketitle

\begin{abstract}
Large Language Models (LLMs) face a persistent trade-off between inference cost and reasoning capability. While larger ``Oracle'' models generally achieve higher accuracy, they are 
more expensive for high-volume deployment. Smaller, cost-effective models 
often struggle with complex tasks. The question is not which tier to 
choose, but how to allocate queries across tiers dynamically. We observe that the emerging practice of LLM cascading and routing implicitly solves an \emph{anytime computation} problem---a class of algorithms, well-studied in classical AI, that produce valid solutions immediately and improve them as additional computation is allocated. In this work, we formalize this connection and propose \textbf{``Pyramid MoA''}, a hierarchical Mixture-of-Agents architecture governed by a decision-theoretic router that dynamically escalates queries only when necessary. We establish a \emph{Probabilistic Anytime Property}, proving that expected solution quality is monotonically non-decreasing with computational depth under identifiable conditions on router precision. We derive a generalized escalation rule from Value of Computation theory that accounts for imperfect oracles, extending the classical monitoring framework of Hansen and Zilberstein to stochastic LLM inference. On the MBPP code generation benchmark, the Consensus Router intercepts \textbf{81.6\%} of bugs. On the GSM8K/MMLU mathematical reasoning benchmark, the system nearly matches 
the Oracle baseline of \textbf{68.1\%} accuracy at the near break-even operating 
point, and achieves \textbf{42.9\%} compute savings in economy mode with a modest 
accuracy trade-off (63.2\%).  Crucially, the router transfers zero-shot to unseen benchmarks: on HumanEval 
it matches the Oracle baseline of \textbf{81.1\%} accuracy, and achieves 
\textbf{62.7\%} cost savings in economy mode (73.2\% accuracy). On the highly 
complex MATH~500 benchmark, the system preserves the \textbf{58.0\%} Oracle 
ceiling and enables up to \textbf{59.0\%} savings in efficiency mode (40.2\% 
accuracy). We further investigate context-aware escalation---where the Oracle receives 
Layer~1 outputs as context---and discover a \emph{context-conditioned anchoring 
effect} consistent across four benchmarks: passing correct SLM reasoning 
improves Oracle accuracy by up to \textbf{19.2 percentage points}, while 
passing incorrect reasoning degrades it by up to \textbf{18.0 percentage 
points}, revealing a fundamental tension in hierarchical Mixture-of-Agents 
architectures. The framework adapts its behavior to task difficulty: enabling substantial 
cost savings on benchmarks where SLMs are strong (up to 42.9\% on 
GSM8K/MMLU) while preserving Oracle-level accuracy on difficult 
out-of-distribution benchmarks (MATH~500, HumanEval).
\end{abstract}

\section{Introduction}

The rapid proliferation of Large Language Models (LLMs) has created a 
persistent tension between inference cost and reasoning capability. 
Recent Mixture-of-Agents (MoA) approaches \cite{wang2024mixture} 
demonstrate that layering multiple models---leveraging the principle 
that an ensemble of weaker learners can outperform any individual 
model---produces stronger outputs than single-model inference. However, 
standard MoA executes all layers for every query regardless of 
difficulty, ignoring the heterogeneous computational needs of different 
inputs. While emerging works like \textit{Sparse MoA} \cite{smoa2024} and \textit{Residual MoA} (RMoA) \cite{rmoa2025} have begun to explore adaptive termination, they often require architectural modifications or complex internal metrics.

We observe that this challenge---deciding how much computation to allocate to a given query---is an instance of a well-studied problem in classical AI: \textbf{anytime computation} \cite{dean1988analysis, zilberstein1996using}. An anytime algorithm produces a valid solution immediately and monotonically improves it as additional computation is allocated. The associated \emph{monitoring problem} \cite{hansen2001monitoring} asks: when should we stop allocating computation and return the current solution? The LLM routing community has been independently reinventing these concepts---early exit strategies, cascade thresholds, confidence-based routing \cite{chen2023frugalgpt, ong2024routellm, teerapittayanon2016branchynet}---without the formal toolkit to analyze them.

We propose \textbf{Pyramid MoA}, a framework that explicitly bridges anytime computation theory and multi-model LLM inference. The name reflects the system's routing geometry: all queries enter at the broad base, where they are processed by a cost-effective ensemble of SLMs. A lightweight router then filters this workload, escalating only a narrowing subset of increasingly difficult queries to the expensive Oracle at the apex. This pyramid-shaped workload distribution---high query volume at the base, narrow volume at the top---ensures that heavy computational investment is concentrated exclusively on the tasks that require it most. By recasting the routing decision as an instance of the classical monitoring problem, we obtain formal guarantees and principled design criteria that ad-hoc cascading approaches lack. Our contributions are:

\textbf{1. Anytime Inference Framework:} We formalize multi-model LLM routing as a probabilistic anytime computation problem. Classical anytime algorithms guarantee per-instance monotonic improvement; we establish a \emph{Probabilistic Anytime Property} (Theorem~\ref{thm:monotonicity}) showing that expected solution quality is monotonically non-decreasing with computational depth under identifiable conditions on router precision. We introduce \emph{performance profiles} adapted from the anytime literature that characterize the cost-quality trade-off and diagnose dataset difficulty.

\textbf{2. Generalized Decision-Theoretic Router:} We derive an optimal escalation rule from Value of Computation theory, generalizing the classical monitoring framework to handle stochastic, imperfect oracles. Unlike prior formulations that assume near-perfect Oracle accuracy, our generalized decision rule (Equation~\ref{eq:generalized_rule}) explicitly accounts for Oracle error, revealing two distinct barriers to escalation: the cost ratio and Oracle imperfection. The resulting router is lightweight, model-agnostic, and compatible with black-box APIs.

\textbf{3. Empirical Dynamic Range \& Context-Aware Analysis:} We demonstrate that the framework adapts its behavior to task 
difficulty---enabling substantial cost savings on benchmarks where SLMs 
are strong while preserving Oracle-level accuracy on difficult 
out-of-distribution benchmarks---and transfers zero-shot to unseen 
benchmarks (HumanEval, MATH 500), validating the theoretical predictions 
across four diverse evaluations. We further reveal a context-conditioned anchoring effect in hierarchical MoA, consistent across four benchmarks and two task domains: passing incorrect SLM reasoning to the Oracle degrades its accuracy by 14.9 to 18.0 percentage points, validating the routing-based design as a necessary safeguard.

\section{Methodology}

\subsection{From Anytime Search to Anytime Inference}
In classical AI, an \emph{anytime algorithm} \cite{dean1988analysis, zilberstein1996using} produces a valid solution immediately and monotonically improves it as additional computation is allocated. This property is formalized through a \emph{performance profile}---a mapping from computation time to solution quality---and a \emph{monitoring problem}---determining when the marginal gain from further computation no longer justifies its cost \cite{hansen2001monitoring}.

We observe that the emerging practice of LLM cascading and routing (e.g., FrugalGPT \cite{chen2023frugalgpt}, RouteLLM \cite{ong2024routellm}) implicitly solves an anytime computation problem. A small model produces an initial answer (valid but potentially suboptimal), and a routing decision determines whether to allocate additional computation via a larger model. However, existing approaches lack the formal framework to analyze this trade-off rigorously. They typically rely on ad-hoc confidence thresholds without characterizing the conditions under which escalation provably improves outcomes.

We bridge this gap by recasting multi-model LLM inference as a \textbf{probabilistic anytime} computation problem. The key departure from the classical setting is that LLM inference is inherently stochastic: unlike deterministic search, where more computation guarantees monotonic improvement, a larger model may occasionally produce a worse answer than a smaller one on any individual query. Our framework addresses this by establishing monotonicity guarantees \emph{in expectation} over query distributions rather than per-instance.

To make the connection precise, we establish the following correspondence:

\begin{table}[H]
\centering
\caption{Mapping from classical anytime computation to Pyramid MoA.}
\label{tab:anytime_mapping}
\begin{tabular}{@{}ll@{}}
\toprule
\textbf{Anytime Computation} & \textbf{Pyramid MoA} \\ \midrule
Initial solution & Layer 1 (SLM ensemble) output \\
Extended computation & Escalation to higher-capability layers \\
Computation time axis & Model capacity / inference cost \\
Performance profile & Accuracy as a function of cost \\
Monitoring problem & Router's escalation decision at each layer \\
Quality guarantee (deterministic) & Quality guarantee (probabilistic) \\
\bottomrule
\end{tabular}
\end{table}

This mapping is not merely analogical. In Section~\ref{sec:probabilistic_anytime}, we show that under identifiable conditions on the router's precision, Pyramid MoA satisfies a formal probabilistic analogue of the classical anytime monotonicity property. In Section~\ref{sec:generalized_routing}, we derive the optimal monitoring policy from the decision-theoretic framework, generalizing the classical Value of Computation to handle stochastic, imperfect oracles.

\subsection{System Components}
\label{sec:system_components}
Our architecture consists of three components:
\begin{enumerate}
    \item \textbf{Layer 1 (The Crowd):} An ensemble of cost-effective models: Llama-3.1-8B-Instruct, Qwen2.5-7B-Instruct, and Gemma-2-9B.
    \item \textbf{The Router:} A trained classifier predicting the probability of Layer 1 failure ($P_{fail}$).
    \item \textbf{Layer 2 (The Oracle):} Llama-3.3-70B-Instruct, invoked only when $P_{fail} > t$, where $t$ is a tunable threshold.
\end{enumerate}

\textbf{Note on Nomenclature (Routing vs.\ Context-Aware):} Unlike context-aware MoA approaches that synthesize a new response from peer outputs \cite{wang2024mixture}, our framework employs a \emph{Routing-Based MoA}. The ensemble's collective signal is used to calibrate the escalation decision rather than to generate the output token sequence. The system components are strictly modular and API-compatible: any black-box model can serve as Layer 1 or Layer 2 without architectural modification, differentiating our approach from methods like RMoA \cite{rmoa2025} that require internal model access. We discuss extensions to context-aware Pyramid MoA---where the Oracle receives Layer 1 outputs as context for refinement---in Section~\ref{sec:discussion}.

\begin{figure}[H]
    \centering
    \begin{tikzpicture}[node distance=2.5cm]
    \node (m2) [model] {\textbf{Qwen2.5-7B}};
    \node (m1) [model, left of=m2, xshift=-1.5cm] {\textbf{Llama-3.1-8B}};
    \node (m3) [model, right of=m2, xshift=1.5cm] {\textbf{Gemma-2-9B}};
    \draw [dashed, gray, line width=0.5pt] ($(m1.north west)+(-0.3,0.3)$)-| ($(m3.south east)+(0.3,-0.3)$)-| ($(m1.north west)+(-0.3,0.3)$);
    \node[anchor=north] at (m2.south) [yshift=-0.5cm] {\small \textbf{Layer 1: Ensemble of SLMs (The Crowd)}};
    \node (router) [router, above of=m2, yshift=2.5cm] {\textbf{Anytime Router}};
    \draw [arrow] (m1.north) .. controls +(up:1.2cm) and +(left:1.2cm) .. (router.west);
    \draw [arrow] (m2.north) -- (router.south);
    \draw [arrow] (m3.north) .. controls +(up:1.2cm) and +(right:1.2cm) .. (router.east);
    \node (features) [rectangle, fill=white, draw=gray, rounded corners, align=center, yshift=1.2cm] at (m2.north) {\scriptsize \textbf{Ensemble Features:} \\ \scriptsize Semantic Agreement \& Intrinsic Logprobs};
    \node (oracle) [model, above of=router, yshift=2cm, fill=green!15] {\textbf{Layer 2: Oracle (Llama-3.3-70B)}};
    \node (exit) [exit, left of=oracle, xshift=-4cm] {\textbf{Output L1 Result}};
    \draw [arrow] (router.north) -- node[anchor=west] {\textbf{Escalate} ($P_{fail} > t$)} (oracle.south);
    \draw [arrow] (router.west) -| node[anchor=south, xshift=1.5cm] {\textbf{Short-Circuit} ($P_{fail} \le t$)} (exit.south);
    \end{tikzpicture}
    \caption{\textbf{Pyramid MoA Architecture:} The system extracts ensemble-wide features from the Layer 1 models to estimate $P_{fail}$. The router solves the anytime monitoring problem---deciding whether to allocate additional computation via the Oracle.}
    \label{fig:arch_diagram}
\end{figure}

\subsection{The Probabilistic Anytime Property}
\label{sec:probabilistic_anytime}
We now formalize the sense in which Pyramid MoA exhibits anytime behavior.

\begin{definition}[Deterministic Anytime Property]
An algorithm $A$ satisfies the \emph{anytime property} if, for any input $x$ and computation budgets $c_1 < c_2$, the solution quality satisfies $Q(A, x, c_2) \geq Q(A, x, c_1)$.
\end{definition}

This is the classical definition, originating with Dean and Boddy \cite{dean1988analysis} and formalized by Zilberstein \cite{zilberstein1996using}. It requires per-instance monotonicity, which cannot hold in stochastic LLM inference---a 70B model can produce an incorrect answer on a query that an 8B model answers correctly. We therefore relax the requirement:

\begin{definition}[Probabilistic Anytime Property]
A system $S$ satisfies the \emph{probabilistic anytime property} with respect to a query distribution $\mathcal{D}$ if, for computational depths $d_1 < d_2$:
\begin{equation}
    \mathbb{E}_{x \sim \mathcal{D}}\big[Q(S, x, d_2)\big] \geq \mathbb{E}_{x \sim \mathcal{D}}\big[Q(S, x, d_1)\big]
    \label{eq:prob_anytime}
\end{equation}
\end{definition}

\begin{theorem}[Monotonicity Condition]
\label{thm:monotonicity}
Let $R$ denote the set of queries escalated by the router (i.e., those with $P_{fail} > t$), and let $\bar{R}$ denote the queries retained at Layer 1. Define $\alpha_{L1}(R)$ and $\alpha_{L2}(R)$ as the accuracy of Layer 1 and Layer 2 on the escalated subset, and $p_R = |R|/N$ as the escalation rate. The Pyramid MoA system satisfies the probabilistic anytime property if and only if:
\begin{equation}
    \alpha_{L2}(R) \geq \alpha_{L1}(R)
    \label{eq:monotonicity}
\end{equation}
\end{theorem}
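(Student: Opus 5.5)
The plan is to reduce the probabilistic anytime property, for the two-depth system here, to a comparison of expected quality at depth $d_1$ (Layer~1 only) and depth $d_2$ (Layer~1 followed by routed escalation). I take $Q(S,x,d)\in\{0,1\}$ to be correctness of the returned answer, so expected quality is accuracy and the empirical expectation over the $N$ queries is an average. At depth $d_1$ every query returns the Layer~1 answer. At depth $d_2$, queries in $\bar R$ still return the Layer~1 answer, while queries in $R$ return the Oracle's answer. The first step is therefore to write both expectations as a partition over $R$ and $\bar R$ using the law of total expectation:
\begin{equation*}
\mathbb{E}[Q(S,x,d_1)] = p_R\,\alpha_{L1}(R) + (1-p_R)\,\alpha_{L1}(\bar R),
\end{equation*}
\begin{equation*}
\mathbb{E}[Q(S,x,d_2)] = p_R\,\alpha_{L2}(R) + (1-p_R)\,\alpha_{L1}(\bar R).
\end{equation*}
The point to emphasise is that the $\bar R$ term is identical in both lines, because the router's decision is a deterministic function of the Layer~1 features. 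The set $R$ is fixed before Layer~2 is invoked, so escalation does not change which queries land in $\bar R$ or what is returned on them.

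Subtracting the two lines gives
\begin{equation*}
\mathbb{E}[Q(S,x,d_2)] - \mathbb{E}[Q(S,x,d_1)] = p_R\bigl(\alpha_{L2}(R)-\alpha_{L1}(R)\bigr).
\end{equation*}
When $p_R>0$, this difference is nonnegative exactly when condition~\eqref{eq:monotonicity} holds, which gives both directions of the equivalence at once. The degenerate case $p_R=0$ (nothing is escalated) is handled separately. There the two depths coincide, the property holds trivially, and the accuracies on $R$ are undefined, so I would either adopt the convention that \eqref{eq:monotonicity} holds vacuously on an empty set or state the theorem for $p_R>0$.

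The main obstacle is not the algebra but pinning down the semantics the statement leaves implicit. First, I must justify treating the comparison between depths as a comparison between the system with and without escalation at a fixed threshold $t$, as opposed to a family of thresholds. Second, I must make sure $\alpha_{L1}(R)$ means the accuracy of whatever Layer~1 would have output on $R$, for example the ensemble's aggregated answer, so that it is the quantity replaced by escalation. If the layers are stochastic, I would define $\alpha$ as the conditional expected correctness given $x\in R$ and note that the argument is unchanged by linearity of expectation. Third, the "if and only if" depends on $p_R>0$ and on $R$ being determined independently of Layer~2's output; I would state both assumptions explicitly before giving the decomposition.
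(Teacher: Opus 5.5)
Your proposal is correct and follows the paper's proof: both decompose the Layer~1-only and full-system accuracies over the partition $R,\bar R$, cancel the shared $(1-p_R)\,\alpha_{L1}(\bar R)$ term, and read off the sign of $p_R\bigl(\alpha_{L2}(R)-\alpha_{L1}(R)\bigr)$. You also treat the degenerate case $p_R=0$ explicitly and require that $R$ be fixed before Layer~2 runs; these are small tightenings that the paper leaves implicit in its phrase ``$p_R>0$ whenever escalation occurs.''
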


\begin{proof}
The accuracy of the full system is $\text{Acc}_{\text{MoA}} = (1 - p_R) \cdot \alpha_{L1}(\bar{R}) + p_R \cdot \alpha_{L2}(R)$, while Layer 1 alone achieves $\text{Acc}_{L1} = (1 - p_R) \cdot \alpha_{L1}(\bar{R}) + p_R \cdot \alpha_{L1}(R)$. The improvement is therefore:
\begin{equation}
    \text{Acc}_{\text{MoA}} - \text{Acc}_{L1} = p_R \cdot \big[\alpha_{L2}(R) - \alpha_{L1}(R)\big]
    \label{eq:improvement}
\end{equation}
Since $p_R > 0$ whenever escalation occurs, this is non-negative if and only if $\alpha_{L2}(R) \geq \alpha_{L1}(R)$.
\end{proof}

\textbf{Remark 1} (Router quality amplifies the anytime property). A perfect router that escalates only queries where L1 fails maximizes $\alpha_{L2}(R) - \alpha_{L1}(R)$. A random router reduces the gap to the marginal difference $\alpha_{L2} - \alpha_{L1}$ across the full distribution. We verify this condition empirically in Section~\ref{sec:monotonicity_verification}.

\subsection{Generalized Decision-Theoretic Routing}
\label{sec:generalized_routing}
We formalize the escalation decision as a decision-theoretic problem, generalizing prior formulations to handle the realistic case where the Oracle is imperfect. Let $P_{fail}$ be the router's estimated failure probability, $P_{oracle}$ the Oracle's success probability, $U_{correct}$ the utility of a correct answer, and $C_{esc}$ the escalation cost. The expected utilities of stopping vs.\ escalating are:
\begin{equation}
    \mathbb{E}[U_{\text{stop}}] = (1 - P_{fail}) \cdot U_{correct}, \quad
    \mathbb{E}[U_{\text{esc}}] = P_{oracle} \cdot U_{correct} - C_{esc}
    \label{eq:utilities}
\end{equation}
Escalation is optimal when $\mathbb{E}[U_{\text{esc}}] > \mathbb{E}[U_{\text{stop}}]$, yielding:
\begin{equation}
    P_{fail} > \underbrace{\frac{C_{esc}}{U_{correct}}}_{\text{cost barrier}} + \underbrace{(1 - P_{oracle})}_{\text{imperfection barrier}}
    \label{eq:generalized_rule}
\end{equation}

This reveals two distinct barriers to escalation. The \textbf{cost barrier} reflects the computational expense, as in standard cascade formulations. The \textbf{imperfection barrier} captures the risk that escalation incurs cost without producing a correct answer. When $P_{oracle} \to 1$, the imperfection barrier vanishes and Equation~\ref{eq:generalized_rule} reduces to the standard cascade rule.

In practice, when the utility of a correct answer substantially exceeds 
the escalation cost ($U_{correct} \gg C_{esc}$), the cost barrier 
becomes negligible and the imperfection barrier dominates the escalation 
decision. This distinguishes our framework from purely cost-driven 
cascades: the primary reason to avoid escalation is not expense but 
Oracle unreliability.

\textbf{Connection to classical monitoring.} Our router solves a single-step version of Hansen and Zilberstein's monitoring problem \cite{hansen2001monitoring}, observing ensemble-level features as a proxy for current solution quality and making a binary stop/escalate decision. This reduction from a sequential to a single-step decision is justified by the two-tier architecture. We discuss extensions to deeper pyramids in Section~\ref{sec:discussion}.

\subsection{Performance Profiles}
\label{sec:performance_profiles}
A \emph{performance profile} \cite{zilberstein1996using, hansen2001monitoring} maps computational investment to expected solution quality. We define Pyramid MoA's performance profile as $\Pi(t) = \mathbb{E}_{x \sim \mathcal{D}}[Q(S, x) \mid \text{threshold} = t]$. At $t = 1.0$ (never escalate), cost is minimal and accuracy equals $\alpha_{L1}$. At $t = 0.0$ (always escalate), cost is maximal and accuracy approaches $\alpha_{L2}$. A well-calibrated router produces a \emph{concave} profile---accuracy rises steeply at first and flattens with additional spending---enabling favorable operating points where most accuracy gain is captured at a fraction of the full cost. We present empirical performance profiles in Section~\ref{sec:experiments}.

\section{Experimental Results}
\label{sec:experiments}

We evaluate Pyramid MoA across four benchmarks spanning code generation and mathematical reasoning. All experiments use a unified Layer 1 ensemble of Llama-3.1-8B-Instruct, Qwen2.5-7B-Instruct, and Gemma-2-9B, with Llama-3.3-70B-Instruct as the Layer 2 Oracle. We train two domain-specific routers and evaluate their in-domain performance and zero-shot transfer.

\begin{table}[h]
\centering
\caption{Router configurations by domain.}
\label{tab:router_comparison}
\begin{tabular}{@{}lll@{}}
\toprule
\textbf{Feature} & \textbf{Consensus Router} & \textbf{Anytime Router} \\ \midrule
\textbf{Algorithm} & XGBoost & XGBoost \\
\textbf{Training Domain} & MBPP (Code) & GSM8K/MMLU (Math) \\
\textbf{Primary Signal} & Semantic Agreement & Token Log-Probabilities \\
\textbf{Routing Characteristic} & Error-catching (high recall) & Selective escalation (high precision) \\
\textbf{Transfer Evaluation} & HumanEval & MATH 500 \\
\bottomrule
\end{tabular}
\end{table}

\subsection{Experiment I: Code Generation (MBPP)}
For code generation tasks, we deployed the \textbf{Consensus Router} (XGBoost), prioritizing safety via semantic agreement among the ensemble's outputs.

On the MBPP holdout set ($N=156$, threshold $t=0.4$), the router achieved a \textbf{Recall of 81.6\%}, intercepting 62 out of 76 erroneous code snippets (Figure~\ref{fig:mbpp_mechanism}a). Feature importance analysis (Figure~\ref{fig:mbpp_mechanism}b) validates the ``Consensus'' hypothesis: \textit{Self-Reported Confidence} was the least predictive feature, while \textit{Output Length Variance} and \textit{Semantic Agreement} were dominant predictors. This confirms that for open-ended generation tasks, models are often ``confident but wrong'' \cite{guo2017calibration}, necessitating external peer-review signals.

\begin{figure}[h]
    \centering
    \begin{subfigure}[b]{0.48\textwidth}
        \centering
        \includegraphics[width=\textwidth]{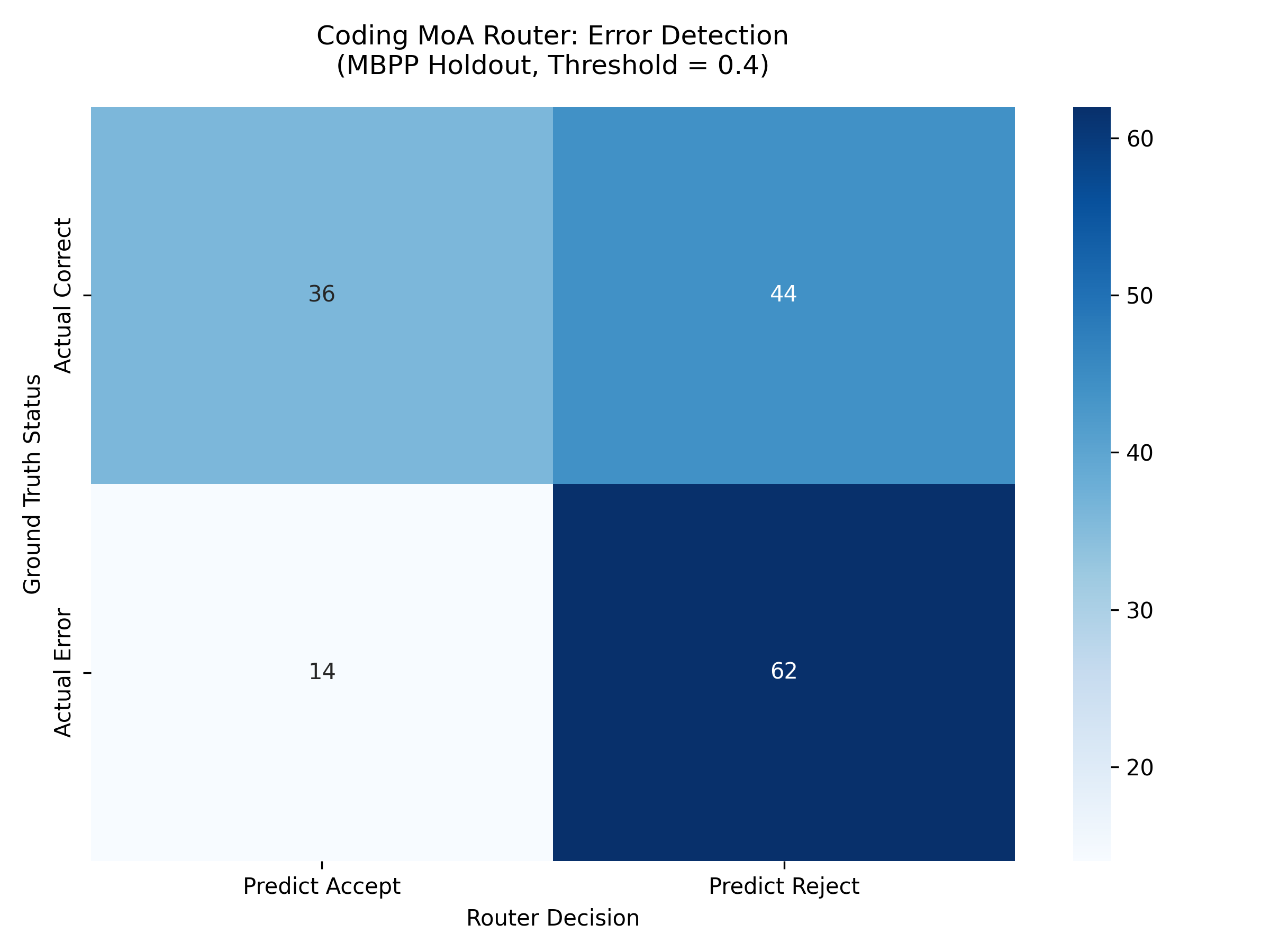}
        \caption{\textbf{Confusion Matrix (MBPP, $t=0.4$)}}
        \label{fig:mbpp_confusion}
    \end{subfigure}
    \hfill
    \begin{subfigure}[b]{0.48\textwidth}
        \centering
        \includegraphics[width=\textwidth]{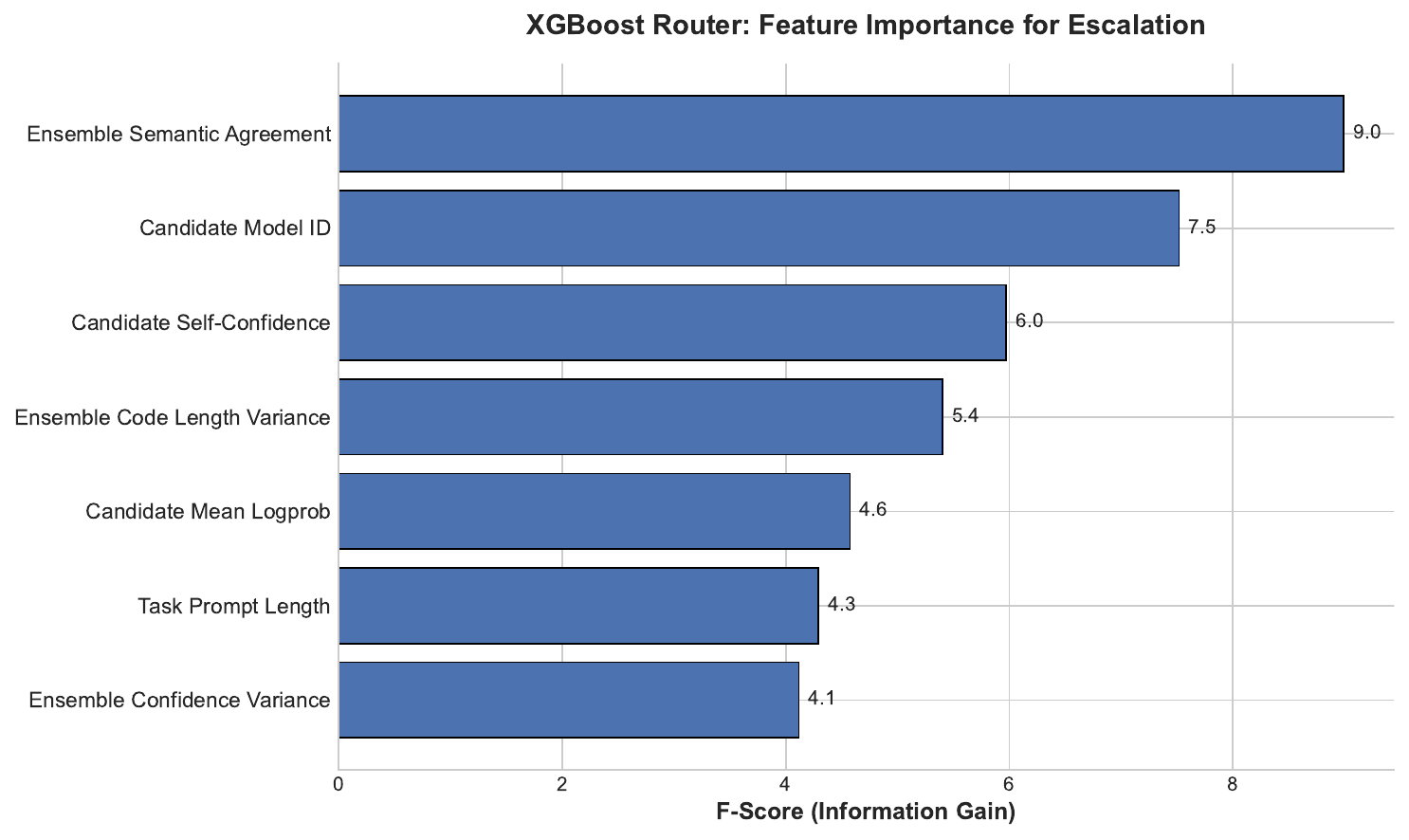}
        \caption{\textbf{Feature Importance (MBPP)}}
        \label{fig:mbpp_importance}
    \end{subfigure}
    \caption{\textbf{Consensus Mechanism:} Evaluation on MBPP showing that peer-agreement signals significantly outperform intrinsic model confidence for error detection.}
    \label{fig:mbpp_mechanism}
\end{figure}

\subsubsection{Zero-Shot Transfer: HumanEval}
To test generalizability, we applied the MBPP-trained Consensus Router zero-shot to the HumanEval benchmark. As shown in Figure~\ref{fig:humaneval_transfer}, the router successfully 
transfers: at the ``Baseline Match'' operating point, the system achieves the 
full Oracle accuracy of \textbf{81.1\%} while still requiring substantially 
fewer Oracle calls than running the 70B model on all queries. In ``Economy 
Mode,'' the system achieves 73.2\% accuracy with \textbf{62.7\%} compute 
savings versus the Oracle.

\begin{figure}[h]
    \centering
    \includegraphics[width=0.95\linewidth]{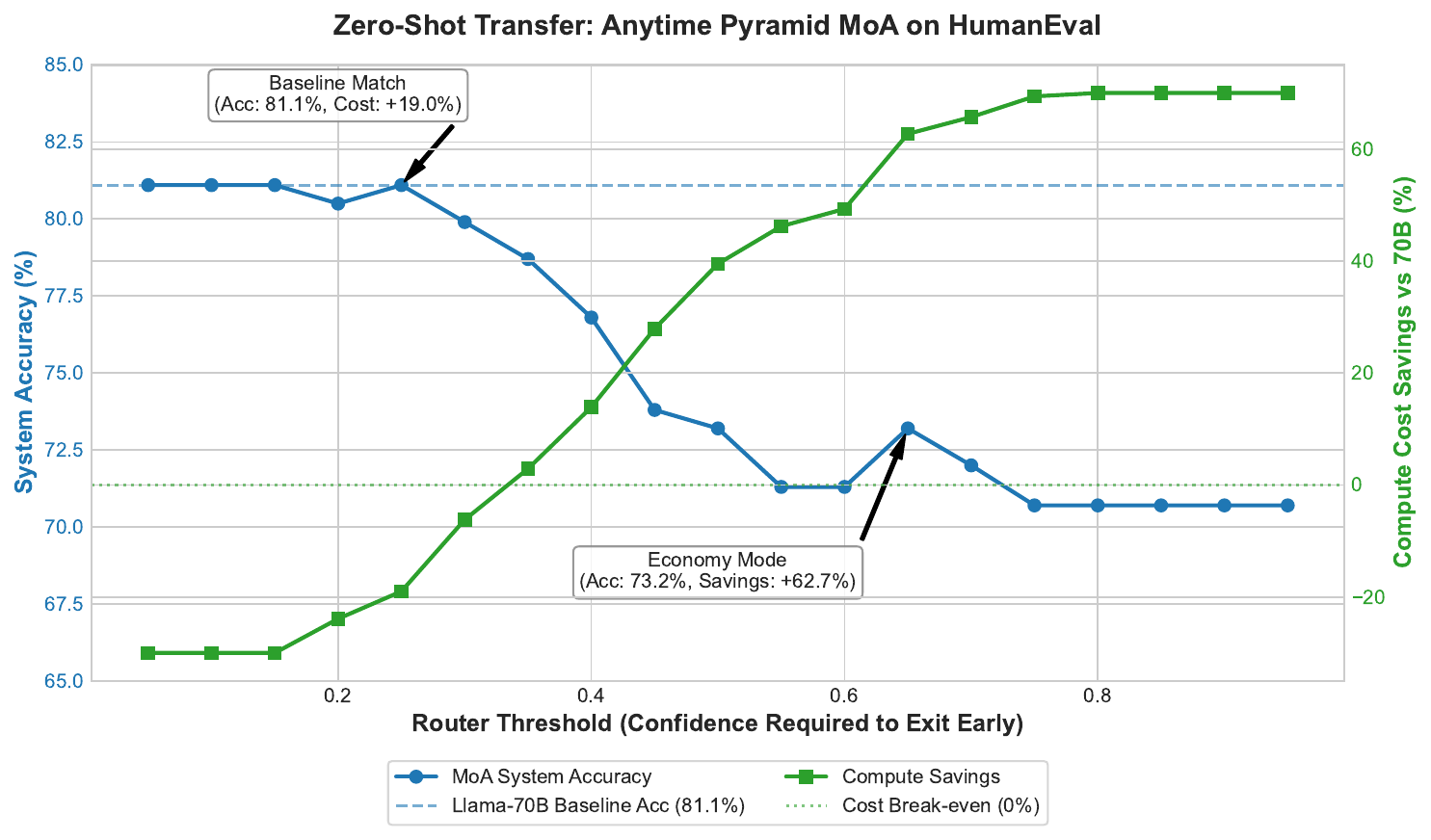}
    \caption{\textbf{Zero-Shot Transfer to HumanEval:} The MBPP-trained Consensus Router transfers effectively, achieving the Oracle baseline (81.1\%) and enabling up to 62.7\% cost savings in economy mode.}
    \label{fig:humaneval_transfer}
\end{figure}

\subsection{Experiment II: Mathematical Reasoning (GSM8K/MMLU)}
For convergent mathematical tasks, we deployed the \textbf{Anytime Router} (XGBoost), utilizing intrinsic token log-probabilities ($avg\_logprob$, $min\_prob$) as the primary routing signal.

\subsubsection{Router Analysis}
On the GSM8K/MMLU holdout set ($N=1{,}053$, threshold $t=0.4$), the router achieved a \textbf{Recall of 78.5\%} (204 of 260 errors detected); Figure~\ref{fig:math_mechanism}a). Feature importance analysis (Figure~\ref{fig:math_mechanism}b) reveals a markedly different signal structure from code generation: \textit{Candidate: Llama-3.1-8B} (the primary model's correctness indicator) and \textit{Confidence Score} are the dominant features, with \textit{Minimum Token Probability} ($min\_prob$) providing complementary signal. Unlike MBPP, intrinsic confidence is highly predictive for convergent tasks where answers are deterministic.

\begin{figure}[h]
    \centering
    \begin{subfigure}[b]{0.48\textwidth}
        \centering
        \includegraphics[width=\textwidth]{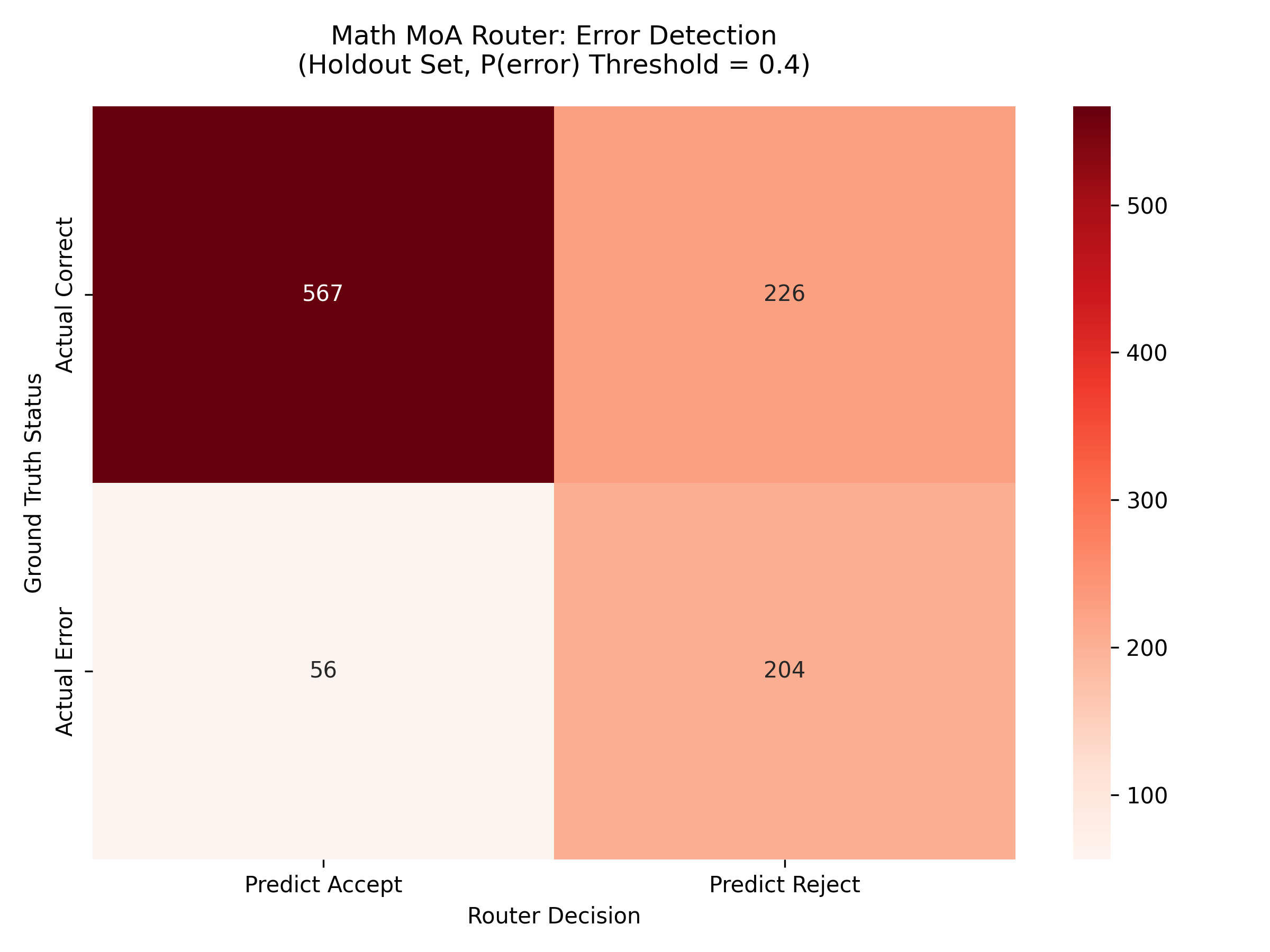}
        \caption{\textbf{Confusion Matrix (GSM8K/MMLU, $t=0.4$)}}
        \label{fig:math_confusion}
    \end{subfigure}
    \hfill
    \begin{subfigure}[b]{0.48\textwidth}
        \centering
        \includegraphics[width=\textwidth]{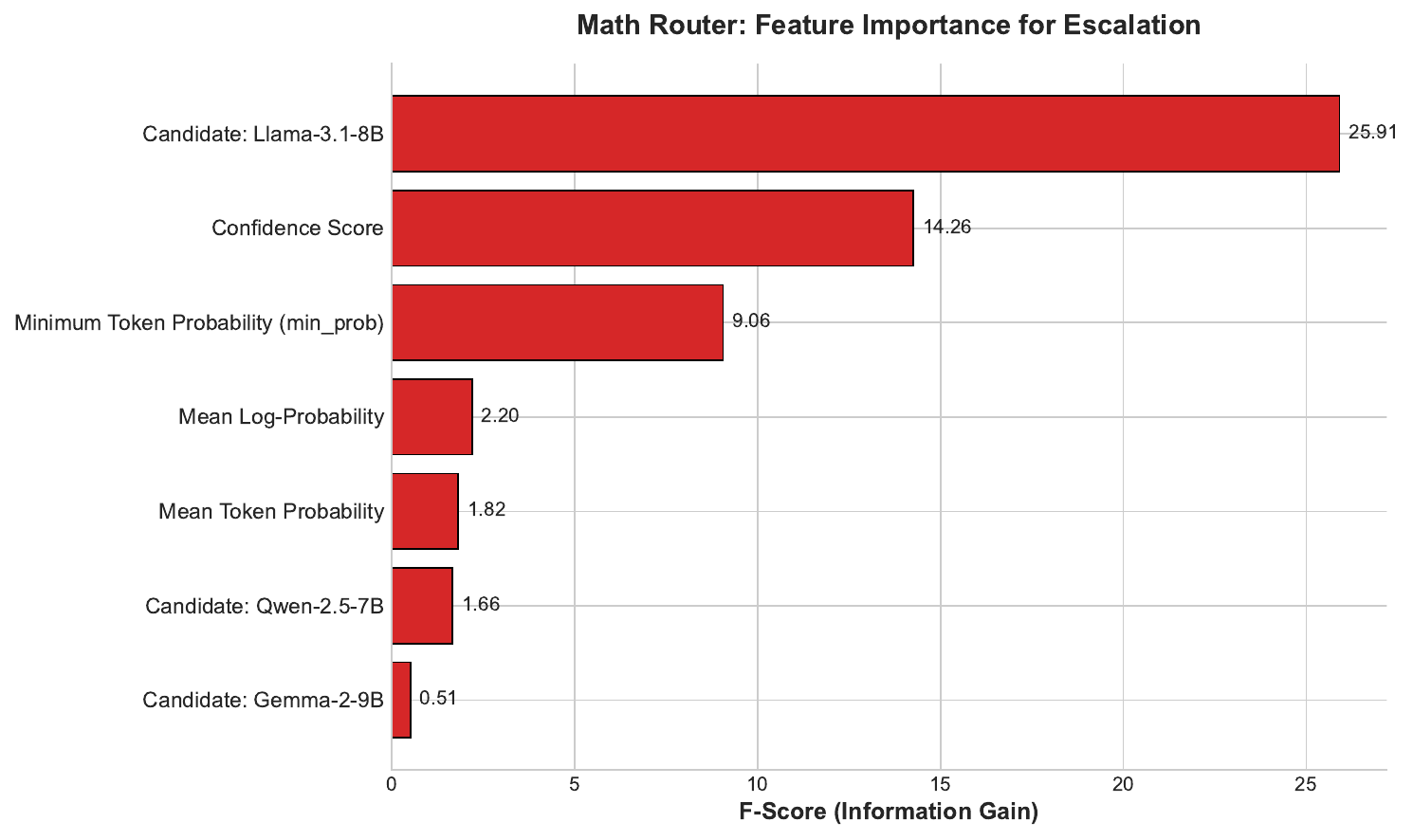}
        \caption{\textbf{Feature Importance (Math Router)}}
        \label{fig:math_importance}
    \end{subfigure}
    \caption{\textbf{Math Router Analysis:} The XGBoost router leverages candidate model correctness and token-level uncertainty signals for escalation decisions on convergent tasks.}
    \label{fig:math_mechanism}
\end{figure}

\subsubsection{The Anytime Performance Profile}
Figure~\ref{fig:math_holdout} presents the full performance profile on the holdout set, sweeping the threshold from $t=0.05$ to $t=0.95$. The Oracle baseline (Llama-3.3-70B) achieves 68.1\% accuracy. At the ``Near Break-even'' operating point ($t \approx 0.15$), the system achieves \textbf{67.8\%} accuracy with \textbf{3.3\%} compute savings---nearly matching the Oracle at reduced cost. In ``Economy Mode'' ($t \approx 0.35$), the system achieves 63.2\% accuracy with \textbf{42.9\%} compute savings. The characteristic concave shape confirms that the router allocates Oracle computation to the highest-value queries first, consistent with the theoretical prediction from Section~\ref{sec:performance_profiles}.

\begin{figure}[h]
    \centering
    \includegraphics[width=0.95\linewidth]{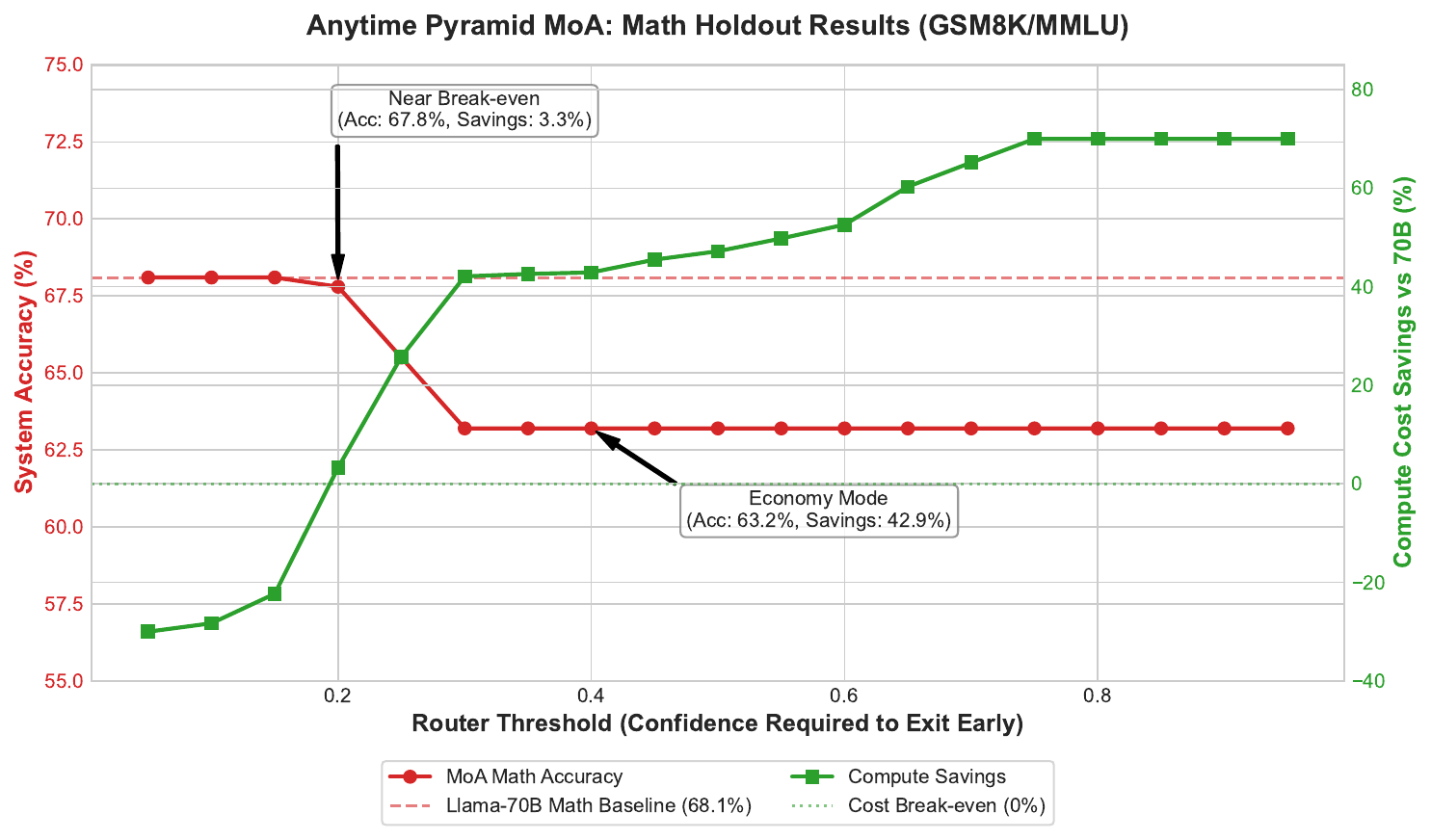}
    \caption{\textbf{Anytime Performance Profile (GSM8K/MMLU Holdout):} The dual-axis plot shows accuracy (red) and compute savings (green) as a function of router threshold. The concave accuracy profile confirms efficient allocation of Oracle computation. The monotonically decreasing accuracy curve as the threshold increases (i.e., as less computation is allocated) empirically demonstrates the probabilistic anytime property: expected solution quality is non-decreasing with computational depth.}
    \label{fig:math_holdout}
\end{figure}
Notably, a slight accuracy recovery is visible at higher thresholds ($t > 0.3$), where eliminating the remaining escalations marginally \emph{improves} system accuracy. This reflects the imperfection barrier (Equation~\ref{eq:generalized_rule}) in practice: at these operating points, a small number of escalations replace correct Layer~1 answers with incorrect Oracle answers. Removing those harmful escalations yields a net accuracy gain, providing direct empirical evidence that Oracle imperfection is not merely a theoretical concern but a measurable effect in the performance profile.

\subsubsection{Zero-Shot Transfer: MATH 500}
To test robustness against severe domain shifts, we evaluated the GSM8K/MMLU-trained router zero-shot on the MATH 500 dataset, which contains AIME-level calculus and algebra problems well outside the training distribution.

As shown in Figure~\ref{fig:math500_transfer}, the router successfully detects the distributional shift. At the ``Zero-Shot Baseline Match'' point ($t \leq 0.25$), the system achieves the full Oracle accuracy of \textbf{58.0\%}. In ``Efficiency Mode,'' the system achieves 40.2\% accuracy with \textbf{59.0\% compute savings}. This behavior is \emph{predicted} by the generalized decision rule 
(Equation~\ref{eq:generalized_rule}): Layer~1 accuracy is low on this 
difficult distribution, producing high $P_{fail}$ estimates that trigger 
frequent escalation. At the same time, with $P_{oracle} \approx 0.58$, the 
imperfection barrier is substantial ($\approx 0.42$), meaning that even 
escalated queries face significant Oracle error risk. The router correctly 
identifies that heavy escalation is needed to preserve accuracy, while the 
imperfection barrier limits the gains achievable through escalation alone.

\begin{figure}[h]
    \centering
    \includegraphics[width=0.95\linewidth]{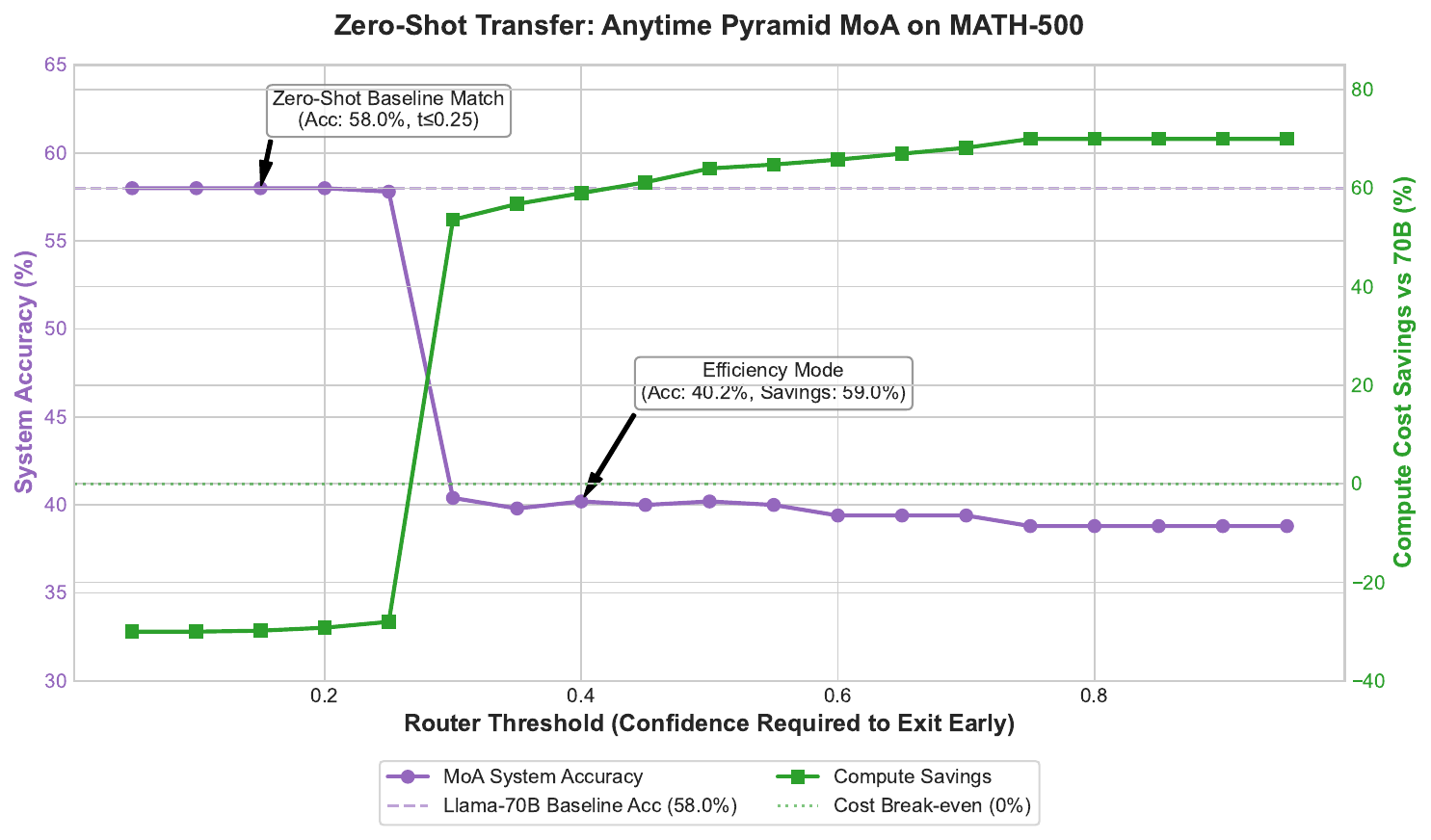}
    \caption{\textbf{Zero-Shot Transfer to MATH 500:} The GSM8K/MMLU-trained router transfers to out-of-distribution problems, preserving the Oracle ceiling (58.0\%) and enabling efficiency gains at higher thresholds.}
    \label{fig:math500_transfer}
\end{figure}

\subsection{Empirical Verification of the Monotonicity Condition}
\label{sec:monotonicity_verification}
Theorem~\ref{thm:monotonicity} establishes that the probabilistic anytime property holds if and only if the Oracle outperforms Layer 1 on the router-escalated subset. Table~\ref{tab:monotonicity} verifies this condition directly across all four benchmarks at a consistent threshold of $t=0.4$.

\begin{table}[h]
\centering
\caption{Verification of the monotonicity condition (Theorem~\ref{thm:monotonicity}) at threshold $t=0.4$. $\alpha_{L1}(R)$ is computed using majority vote over the Layer 1 ensemble. In all cases, $\alpha_{L2}(R) > \alpha_{L1}(R)$, confirming strict probabilistic anytime improvement.}
\label{tab:monotonicity}
\begin{tabular}{@{}lcccccc@{}}
\toprule
\textbf{Benchmark} & \textbf{$p_R$} & \textbf{$\alpha_{L1}(R)$} & \textbf{$\alpha_{L2}(R)$} & \textbf{Gap} & \textbf{$\Delta$Acc} & \textbf{Satisfied?} \\ \midrule
MBPP (Code) & 0.558 & 0.379 & 0.690 & +0.311 & +17.3\% & \checkmark \\
HumanEval (Code, OOD) & 0.561 & 0.696 & 0.804 & +0.109 & +6.1\% & \checkmark \\
GSM8K/MMLU (Math) & 0.271 & 0.632 & 0.768 & +0.137 & +3.7\% & \checkmark \\
MATH 500 (Math, OOD) & 0.110 & 0.145 & 0.582 & +0.436 & +4.8\% & \checkmark \\
\bottomrule
\end{tabular}
\end{table}

The table reveals how router quality and task difficulty interact. On MBPP, the router achieves the largest gap (+31.1pp) by precisely identifying queries where the ensemble disagrees---queries on which Layer 1 accuracy drops to 37.9\% while the Oracle achieves 69.0\%. On HumanEval, the MBPP-trained router transfers zero-shot with a +10.9pp gap, demonstrating that the learned consensus signal generalizes across code generation tasks. On GSM8K/MMLU, the corrected router escalates 27.1\% of queries---targeting the subset where Layer 1 accuracy is 63.2\%, well below the population average---and the Oracle achieves 76.8\% on this harder subset, yielding a +13.7pp gap. On MATH 500, the router escalates only 11.0\% of queries, selecting the most difficult problems where Layer 1 accuracy is just 14.5\%. The Oracle achieves 58.2\% on this subset---a +43.6pp gap---demonstrating that the framework concentrates expensive computation precisely where it yields the largest marginal improvement.

\subsection{Context-Aware Escalation: Anchoring in Hierarchical MoA}
\label{sec:context_aware}

The current framework employs \emph{routing-based} mixture: the Oracle generates its answer independently without access to Layer~1 outputs. A natural question is whether passing the SLM ensemble's chain-of-thought reasoning and candidate answers as context to the Oracle improves accuracy on escalated queries---transforming the architecture from routing-based to \emph{context-aware} Pyramid MoA.

We evaluate this across four benchmarks by providing the Oracle with all available Layer~1 outputs (chain-of-thought and final answers from up to three SLMs) alongside the original query, with instructions to review, reconcile, and correct the candidates. Table~\ref{tab:context_aware} presents the results segmented by whether the SLM majority answer was correct.

\begin{table}[h]
\centering
\caption{Context-aware escalation across four benchmarks. ``Baseline'' denotes 
the Oracle solving independently; ``With Context'' denotes the Oracle receiving 
Layer~1 outputs. Results are segmented by whether the SLM majority answer was 
correct. The anchoring effect---context helps when SLMs are correct, hurts when 
SLMs are wrong---is consistent across both mathematical reasoning and code 
generation tasks.}
\label{tab:context_aware}
\begin{tabular}{@{}llcccc@{}}
\toprule
\textbf{Benchmark} & \textbf{SLM Majority} & \textbf{$N$} & \textbf{Baseline} & \textbf{With Context} & \textbf{$\Delta$} \\ \midrule
MATH 500 & Correct & 239 & 70.3\% & 89.5\% & +19.2 pp \\
         & Wrong   & 261 & 59.0\% & 41.0\% & $-$18.0 pp \\
\midrule
MMLU     & Correct & 301 & 89.4\% & 96.0\% & +6.6 pp \\
         & Wrong   & 251 & 59.8\% & 43.8\% & $-$16.1 pp \\
\midrule
GSM8K    & Correct & 1079 & 99.0\% & 99.4\% & +0.5 pp \\
         & Wrong   & 121 & 88.4\% & 73.6\% & $-$14.9 pp \\
\midrule
HumanEval & Correct & 112 & 89.3\% & 97.3\% & +8.0 pp \\
           & Wrong  & 52  & 63.5\% & 63.5\% & +0.0 pp \\
\bottomrule
\end{tabular}
\end{table}

\textbf{The anchoring effect is general.} The context-conditioned anchoring 
pattern is consistent across all four benchmarks and both task domains. When the 
SLM majority is correct, context improves Oracle accuracy by +0.5 to +19.2pp. 
When the SLM majority is wrong, context degrades Oracle accuracy by $-$14.9 to 
$-$18.0pp across three of four benchmarks. The exception is HumanEval, where 
context has no effect on the majority-wrong subset---possibly because code 
generation tasks provide more objective verification signals (test execution) 
that resist anchoring. GSM8K shows a ceiling effect on the positive side 
(+0.5pp) because the Oracle already achieves 99.0\% on the majority-correct 
subset, leaving little room for improvement. The negative side remains 
substantial ($-$14.9pp), confirming that even on easy benchmarks, incorrect SLM 
reasoning reliably degrades Oracle performance.

\textbf{Architectural tension on the escalated subset.} We examine this tension in detail on MATH~500, where the effect is most pronounced. On the 55 queries escalated by the router at $t=0.4$, 95\% have incorrect SLM majorities---precisely the subset where context is most harmful. On this escalated subset, passing SLM context degrades Oracle accuracy from 58.2\% to 36.4\% ($-$21.8pp). The router escalates records \emph{because} it believes the SLMs are unreliable, yet the context-aware MoA design passes those unreliable outputs as context to the Oracle. This is the subset where context hurts the most.

\textbf{Implication.} The negative result validates the routing-based design: the router is doing real work by identifying genuinely hard problems (Layer~1 accuracy of 14.5\% on the escalated subset), and blindly passing SLM context is actively harmful on exactly those problems. This motivates a natural extension---a \emph{ternary} routing decision (escalate with context, escalate without context, or retain at Layer~1)---discussed in Section~\ref{sec:discussion}.

\section{Related Work}

\textbf{Anytime Algorithms \& the Monitoring Problem:}
The concept of trading computation time for solution quality originates 
with Dean and Boddy \cite{dean1988analysis}, who coined the term 
``anytime algorithm,'' and Horvitz \cite{horvitz1987reasoning}, who 
independently developed the equivalent notion of ``flexible 
computation.'' Zilberstein \cite{zilberstein1996using} extended this 
line to the composition of complex systems from anytime components, and 
Hansen and Zilberstein \cite{hansen2001monitoring} framed the 
\emph{monitoring problem} as a sequential decision process solvable by 
dynamic programming. Our work extends this line to generative AI, casting the multi-model 
routing decision as a single-step monitoring problem in the current 
two-layer architecture. The ``early exit'' strategy in deep learning, exemplified by 
BranchyNet \cite{teerapittayanon2016branchynet}, represents a related 
but architecturally distinct approach: early exit modifies the model's 
internal structure, whereas our framework operates on black-box model 
outputs.

\textbf{Mixture-of-Agents (MoA):}
Collaborative layers of LLMs can outperform individual state-of-the-art models \cite{wang2024mixture}. However, standard MoA implementations execute a pre-determined number of layers for all inputs, ignoring the heterogeneous difficulty of user queries. \textbf{Self-MoA} \cite{selfmoa2025} demonstrates that iteratively querying a single strong model can sometimes outperform diverse ensembles. \textbf{Sparse MoA (SMoA)} \cite{smoa2024} and \textbf{Residual MoA (RMoA)} \cite{rmoa2025} introduce early stopping and residual connections to reduce compute. Unlike RMoA, which relies on architectural residuals, our \textbf{Pyramid MoA} employs a distinct \emph{decision-theoretic} approach grounded in anytime computation theory, providing formal monotonicity guarantees (Theorem~\ref{thm:monotonicity}) and a generalized escalation rule (Equation~\ref{eq:generalized_rule}) that accounts for Oracle imperfection.

\textbf{LLM Cascading \& Routing:}
FrugalGPT \cite{chen2023frugalgpt} introduced cascading as a cost-reduction strategy. RouteLLM \cite{ong2024routellm} trains preference-based routers using chatbot arena data to select between strong and weak models. Our framework subsumes simple cascading as a special case and provides the theoretical apparatus---performance profiles, the monotonicity condition, the generalized decision rule---to analyze when and why cascading works. The key distinction is that our router leverages ensemble agreement across multiple models rather than single-model confidence or pairwise preference data.

\textbf{Connection to Retrieval-Augmented Generation.} This finding parallels 
well-known challenges in retrieval-augmented generation (RAG), where passing 
incorrect retrieved documents to an LLM can cause anchoring---the model commits 
to plausible-but-wrong reasoning derived from the context rather than solving 
independently. In our setting, the Layer~1 ensemble outputs play the role of 
``retrieved candidates.'' On MATH~500, when those candidates are correct, passing them to the 
Oracle improves its accuracy by +19.2pp; when they are incorrect, context 
degrades Oracle performance by $-$18.0pp (59.0\% $\to$ 41.0\%). This pattern 
holds across all four benchmarks (Table~\ref{tab:context_aware}), with the 
negative effect ranging from $-$14.9pp to $-$18.0pp. This suggests that effective context-aware Pyramid MoA requires not 
just passing Layer~1 outputs as context, but \emph{filtering or weighting} those 
outputs based on confidence. A natural extension is to escalate with context only 
when the router's confidence in Layer~1 correctness exceeds a secondary 
threshold---effectively solving a two-threshold routing problem where the system 
decides not just \emph{whether} to escalate, but \emph{how} (with or without 
context). We defer this extension to future work but note that it directly 
motivates the ternary routing strategy outlined in Section~\ref{sec:discussion}.

\section{Discussion \& Conclusion}
\label{sec:discussion}

We presented Pyramid MoA, a framework that bridges classical anytime computation theory and modern multi-model LLM inference. By formalizing the routing decision as an instance of the anytime monitoring problem, we established a Probabilistic Anytime Property (Theorem~\ref{thm:monotonicity}) with provable monotonicity guarantees and derived a generalized escalation rule (Equation~\ref{eq:generalized_rule}) that accounts for Oracle imperfection. Empirically, the framework demonstrates dynamic range across task difficulty: the Consensus Router achieves 81.6\% bug recall on MBPP and transfers zero-shot to HumanEval with 62.7\% cost savings; the Anytime Router nearly matches the 68.1\% Oracle baseline on GSM8K/MMLU with up to 42.9\% compute savings in economy mode, and transfers to MATH 500, preserving the 58.0\% accuracy ceiling. Direct verification of the monotonicity condition 
(Theorem~\ref{thm:monotonicity}, Table~\ref{tab:monotonicity}) confirms 
that the probabilistic anytime property is strictly satisfied across all 
four benchmarks. Furthermore, context-aware escalation experiments reveal a consistent anchoring effect across all four benchmarks ($-$14.9 to $-$18.0pp when SLM majority is wrong), demonstrating that the routing-based design is not merely a cost optimization but a necessary safeguard against context poisoning on hard queries.

\textbf{Limitations \& Future Work.} The context-aware escalation analysis (Section~\ref{sec:context_aware}) reveals that the Oracle cannot reliably discount incorrect SLM reasoning, even when the capability gap is substantial. A natural mitigation is a \emph{ternary} routing decision: the router assigns queries to one of three actions---retain at Layer~1 (high SLM confidence), escalate \emph{with} context (medium confidence, where SLMs are likely correct and context is beneficial), or escalate \emph{without} context (low confidence, where SLM outputs are unreliable). The context-aware anchoring data suggests that such a ternary system could 
achieve accuracy \emph{exceeding} the Oracle baseline: on medium-confidence 
queries where SLM context is correct, the Oracle with context outperforms the 
Oracle alone by up to +19.2pp (Table~\ref{tab:context_aware}), a boost unavailable to the standalone Oracle. This extends the single-threshold decision rule to two thresholds and requires principled threshold transfer for zero-shot evaluation, which we leave to future work. Additionally, extending the architecture to deeper pyramids (e.g., 8B $\to$ 70B $\to$ 405B) would require the full sequential monitoring formulation from Hansen and Zilberstein \cite{hansen2001monitoring}. A self-evaluating feedback loop---where Oracle corrections are used to LoRA fine-tune the base SLM---could reduce escalation rates over time. Finally, future work will incorporate comprehensive statistical significance testing across all benchmarks.


\end{document}